\documentclass{amsart}
\usepackage{cite}
\usepackage{amsmath,amssymb,amsfonts}
\usepackage{algorithmic}
\usepackage{graphicx}
\usepackage{textcomp}
\usepackage[table,xcdraw]{xcolor}
\def\BibTeX{{\rm B\kern-.05em{\sc i\kern-.025em b}\kern-.08em
    T\kern-.1667em\lower.7ex\hbox{E}\kern-.125emX}}
\usepackage{float}
\usepackage{amsthm}
\usepackage{xypic}
\usepackage{cleveref}
\usepackage{bm}
\usepackage{adjustbox}

%
%
\newtheorem{theorem}{Theorem}[section]
\newtheorem{lemma}[theorem]{Lemma}

\newtheorem{corollary}[theorem]{Corollary}
\theoremstyle{definition}
\newtheorem{definition}[theorem]{Definition}
\newtheorem{remark}[theorem]{Remark}

%
%

\def\N{\mathbb{N}}

\def\R{\mathbb{R}}

\def\W{\mathbb{W}}

%
%

\newcommand\set[1]{\left\lbrace #1\right\rbrace}
\newcommand\norm[1]{\left\lVert#1\right\rVert}
\newcommand{\vect}[1]{\boldsymbol{#1}}
\newcommand{\abs}[1]{\left\vert #1 \right\vert}


\begin{document}

\title[Adaptive Template Systems]{Adaptive template systems: 
Data-driven  feature selection  
for  learning with  persistence diagrams 
}

\author[Luis Polanco]{Luis Polanco}
\address{
\shortstack[l]{
Department of Computational Mathematics, Science \& Engineering \\
Department of Mathematics, \\
Michigan State University \\
East Lansing, MI, USA.}}
\email{polanco2@msu.edu}

\author[Jose Perea]{Jose A. Perea }
\address{
\shortstack[l]{
Department of Computational Mathematics, Science \& Engineering \\
Department of Mathematics, \\
Michigan State University \\
East Lansing, MI, USA.}}
\email{joperea@msu.edu}
\thanks{This work was partially supported by the NSF (DMS-1622301)}


\subjclass[2010]{Primary 55N99, 68W05; Secondary 55U99}


\keywords{Topological Data Analysis, Persistent Homology, Machine Learning, Featurization.}

\maketitle

\begin{abstract}
    Feature extraction from persistence diagrams, 
    as a tool to enrich machine learning techniques, has received increasing attention in recent years. In this paper we explore an adaptive methodology to localize features in persistent diagrams, which are then used in learning tasks. Specifically, we investigate three algorithms, CDER, GMM and HDBSCAN, to obtain adaptive template functions/features. 
    Said features are evaluated in three classification experiments with persistence diagrams. Namely, manifold, human shapes and protein classification. 
    The main conclusion of our analysis is that adaptive template systems, as a feature extraction technique, yield competitive and often superior results in the studied examples. Moreover, from the adaptive algorithms here studied,    CDER  consistently provides the most reliable and robust adaptive featurization. 
\end{abstract}

\section{Introduction}
One of the central questions 
in Topological Data Analysis (TDA)
is how to leverage topological information, like persistence diagrams \cite{perea2018brief}, for  machine learning purposes. 
This idea has been explored, for instance, in \cite{landscapes, protein_class, Reininghaus, preprint} and \cite{persistent_images}. 

In particular,  \cite{preprint} establishes theoretical and computational tools to translate  supervised machine learning tasks (e.g., classification and regression) with topological features, into the problem of approximating continuous real-valued functions on the space of persistence diagrams, $\mathcal{D}$, endowed with the Bottleneck distance. 
The main concept is that of templates. 
These are continuous real-valued compactly supported functions on $\mathbb{W} := \set{(x_1,x_2)\in\mathbb{R}^2 \;\vert\; 0\leq x_1<x_2}$, which (by integration against  persistence 
measures, whereby a diagram is replaced by a sum of Dirac deltas) 
yield continuous functions on  
$\mathcal{D}$.
The same work shows that one can construct countable families of template functions (a template system), 
which in turn give rise  to 
dense subsets of $C(\mathcal{D},\R)$
with respect to the compact-open topology. \Cref{thm:regression} below indicates how a template system can be utilized to generate (polynomial) features for supervised  machine learning problems on persistence diagrams.

In this paper we address the question of producing template systems that 
are attuned (adaptive) to the input data set
and the supervised classification problem at hand.
We explore and compare different strategies to assemble adaptive template systems; 
namely, Cover-Tree Entropy Reduction (CDER) \cite{cder_paper}, Gaussian Mixture Models (GMM) \cite{gmm} and Hierarchical density-based spatial clustering of applications with noise (HDBSCAN) \cite{hdbscan}.
The conclusion is
that   CDER   is the most
consistently successful strategy out 
of the ones explored. 

We present three different examples where we use adaptive template functions to extract features   from persistence diagrams 
for supervised classification tasks.
First, we explore a $6$ class classification problem presented in \cite{preprint}. In this problem, several random samples are taken form each
of $6$ manifolds, and  persistence diagrams
  are computed as descriptors in each case. 
We then use our adaptive template functions and  compare to the 
results 
provided in \cite{preprint}. The average classification accuracy of adaptive templates for both the training and testing sets is comparable to that of \cite{preprint}. On the other hand, the standard deviation of our results is much smaller, making our methodology more stable compared to the template systems proposed in \cite{preprint}.

We then report results on the  SHREC 2014 synthetic data set \cite{Pickup:2014},  which involves  a $15$ class supervised learning problem. Each class in this data set corresponds to a human body in five different poses and three different shapes: male, female and child. The data points in this data set are 3D meshes. 
In \cite{Reininghaus}  a heat kernel signature is computed for each mesh and for $10$ different kernel parameter values.
This defines $10$ different classification tasks, each of which uses the corresponding heat sub-level set persistence 
diagrams as inputs.
The results we obtain using adaptive template functions are contrasted with \cite{preprint}. The results from this experiment highlight how CDER provides a more reliable method for obtaining adaptive templates when compared to GMM and HDBSCAN. Furthermore, when we select the heat kernel signature corresponding to the 6-th frequency, CDER adaptive templates generate a classification model with accuracy on par with the best results in \cite{preprint} and \cite{Reininghaus}. 
When compared to the non-adaptive (tent) templates of \cite{preprint}, our classification results are superior.

Finally, we present results for a protein classification problem
on the publicly available Protein Classification Benchmark data set PCB00019   \cite{protein_collection}. This data set contains spatial information for $1,357$ proteins as well as $55$ distinct supervised classification tasks. The results on \cite{protein_class} are used as a benchmark,
since they also use persistence diagrams, but the 
extracted features are hand-crafted  
to reflect chemical/physical properties of interest. In this experiment, adaptive template functions improve  the average classification accuracy reported in \cite{protein_class} from $82\%$ to around
$98\%$.

\section{Approximating Functions on Persistence Diagrams}
The goal of this section is to provide the theoretical 
framework in which template functions are used 
as a means to approximating continuous functions 
on the space of  persistence diagrams. 

A \textbf{persistence diagram}  is a pair $S_\mu = (S,\mu)$ where
\begin{enumerate}
    \item $S\subset \mathbb{W} := \set{(x_1,x_2)\in\mathbb{R}^2 \;\vert\; 0\leq x_1<x_2}$ is such that for any $\epsilon > 0$, the set 
    \[
    u_\epsilon(S) = \set{(x_1,x_2)\in S \;\vert\; pers(x_1,x_2) :=  x_2 - x_1>\epsilon}
    \] is finite. 
    \item $\mu : S\rightarrow \mathbb{N}$ is an arbitrary  (multiplicity) function.
\end{enumerate}

We will denote the elements of $S_\mu$ as $(x,m)$, 
where $x\in S$ and $1\leq m \leq \mu(x)$ is an integer.
Let $\mathcal{D}$ denote the \textbf{set of persistence diagrams};
this set comes equipped with a metric, the bottleneck 
distance, which we describe next.

\begin{definition}
A \textbf{partial matching} $M$ between persistence diagrams $S_\mu$ and $T_\alpha$ is a bijection between a subset of $S_\mu$ and a subset of $T_\alpha$; i.e., $M: S_\mu'\subset S_\mu \rightarrow T_\alpha'\subset T_\alpha$. If $(y,n) = M(x, m)$ we say that $(x,m)$ is \textbf{matched} with $(y,n)$. If $(z, k) 
\notin S_\mu'$ or $(z, k) \notin  T_\alpha'$ we call it \textbf{unmatched}.
\end{definition}

\begin{definition}
Given $\delta>0$, a partial matching $M$ is called a \textbf{$\delta$-matching} if
\begin{enumerate}
    \item If $(x,m) \in S_\mu'$ is matched with $(y, n)$, then \[\|x-y\|_\infty \leq \delta\] 
    \item If $(z,m)\in S_\mu \cup T_\alpha$ is unmatched, then $pers(x) \leq 2\delta$. 
\end{enumerate}
\end{definition}

\begin{definition} 
The \textbf{bottleneck distance} $d_B :\mathcal{D}\times\mathcal{D}\rightarrow \mathbb{R}^+$ is defined as \[d_B(D_1, D_2) = \inf \{ \delta > 0 : M:D_1\rightarrow D_2 \text{ is a $\delta$-matching} \}. \]\end{definition}

In \cite{Cohen-Steiner2007} it is shown  that $d_B$ defines a metric on $\mathcal{D}$ and that $\mathcal{D}$ is the metric completion  of $\mathcal{D}_0 := \set{(S,\mu)\in\mathcal{D} \;\vert\; S \text{ is finite}}$ with respect to $d_B$. 
In \cite{preprint} the authors present a complete characterization of (relatively) compact subsets of $(\mathcal{D}, d_B)$. In particular, one can prove that compact subsets of $(\mathcal{D}, d_B)$ have empty interiors (see Theorem $13$ in \cite{preprint}). 
This implies that $(\mathcal{D}, d_B)$ is not locally compact and therefore the compact-open topology on $C(\mathcal{D}, \R)$, the space of real-valued continuous 
functions on $\mathcal{D}$,  is not metrizable.

\subsection{Template functions}
\begin{lemma}
Let $C_c(\W)$ denote the set of real-valued continuous 
functions on $\W$ with compact support. 
For each $f\in C_c(\W)$, the function $\nu_f:\mathcal{D}\rightarrow\R$ defined as 
\[
\displaystyle \nu_f(S_\mu) = \sum_{x\in S}\mu(x)f(x)
\] is continuous.
\end{lemma}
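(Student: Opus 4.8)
The plan is to give a direct $\epsilon$--$\delta$ argument against the bottleneck distance, after first checking that $\nu_f$ is even well-defined. First I would fix $f\in C_c(\W)$ and set $K:=\mathrm{supp}(f)$; we may assume $K\neq\emptyset$, since otherwise $\nu_f\equiv 0$ is trivially continuous. Since $K$ is a compact subset of $\W$ on which the continuous map $pers$ is strictly positive, I get $\epsilon_0:=\min_{x\in K}pers(x)>0$, and of course $\|f\|_\infty<\infty$. I would also record that $f$, being continuous with compact support on the metric space $\W$, is uniformly continuous on $\W$ (if $\|x_n-y_n\|_\infty\to 0$ while $|f(x_n)-f(y_n)|\geq\eta$, then infinitely many $x_n$ lie in the compact set $K$, and a subsequence $x_n\to x$ forces $y_n\to x$ and $f(x_n),f(y_n)\to f(x)$, a contradiction). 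Finally, since $f$ vanishes off $K$, for any diagram $S_\mu$ we have $\nu_f(S_\mu)=\sum_{x\in S\cap K}\mu(x)f(x)$, and because $S\cap K\subseteq u_{\epsilon_0/2}(S)$ is finite with each $\mu(x)<\infty$, this is a finite sum; hence $\nu_f$ is well-defined.

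To prove continuity at a fixed diagram $S_\mu$, I would put $P:=\sum_{x\in u_{\epsilon_0/2}(S)}\mu(x)<\infty$ and let $\eta>0$. Using the uniform continuity above, choose $\delta_1>0$ with $\|x-y\|_\infty<\delta_1\Rightarrow|f(x)-f(y)|<\eta/(P+1)$, and set $\delta^\ast:=\min\{\delta_1,\epsilon_0/4\}$. Suppose $d_B(S_\mu,T_\alpha)<\delta^\ast$, so that there is a $\delta$-matching $M:S_\mu'\to T_\alpha'$ with $\delta<\delta^\ast$. Writing both values as finite sums over points-with-multiplicity, $\nu_f(S_\mu)-\nu_f(T_\alpha)=\sum_{(x,m)\in S_\mu}f(x)-\sum_{(y,n)\in T_\alpha}f(y)$, I would sort the terms according to $M$. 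The unmatched points contribute nothing: any unmatched $(z,k)$ has $pers(z)\leq 2\delta<\epsilon_0$, so $z\notin K$ and $f(z)=0$. What remains is $\sum_{(x,m)\in S_\mu'}\bigl(f(x)-f(M(x,m))\bigr)$, and a term here is nonzero only if $x\in K$ or (writing $(y,n)=M(x,m)$) $y\in K$; in either case $\|x-y\|_\infty\leq\delta<\epsilon_0/4$ forces $pers(x)>\epsilon_0/2$, i.e. $x\in u_{\epsilon_0/2}(S)$, and also $|f(x)-f(y)|<\eta/(P+1)$. Since the indices $(x,m)$ of the surviving terms inject into $\{(x,m):x\in u_{\epsilon_0/2}(S),\ 1\leq m\leq\mu(x)\}$, there are at most $P$ of them, and therefore $|\nu_f(S_\mu)-\nu_f(T_\alpha)|\leq P\cdot\eta/(P+1)<\eta$. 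As $S_\mu$ and $\eta$ were arbitrary, this gives $\nu_f\in C(\mathcal{D},\R)$.

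The routine ingredients are the uniform continuity of $f$ and the triangle-inequality bookkeeping. The step I expect to require the most care, and where the definition of a persistence diagram is genuinely used, is controlling how many matched pairs can contribute: the point is that anything lying in $K$, or matched across a distance $\leq\delta$ to a point of $K$, must have persistence bounded below by $\epsilon_0/2$ once $\delta<\epsilon_0/4$, hence lies in the \emph{finite} set $u_{\epsilon_0/2}(S)$, while unmatched points have persistence at most $2\delta$ and so leave $K$ altogether. One should also note that $\delta^\ast$ depends on the basepoint $S_\mu$ through $P$ — this is a feature, not a defect, consistent with $C(\mathcal{D},\R)$ carrying the (non-metrizable) compact-open topology rather than a uniform one.
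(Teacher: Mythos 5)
Your argument is correct and complete. The paper itself gives no proof of this lemma (it defers to Lemma 23 of the cited reference), so there is nothing internal to compare against; your $\epsilon$--$\delta$ argument is the standard one, and you correctly isolate the two points where the definition of a persistence diagram and of a $\delta$-matching are actually used: unmatched points have persistence at most $2\delta<\epsilon_0$ and hence lie outside $\mathrm{supp}(f)$, while any matched pair contributing a nonzero term must have its $S$-side in the finite set $u_{\epsilon_0/2}(S)$, giving the bound by $P\cdot\eta/(P+1)$.
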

\begin{proof}
See Lemma $23$ in \cite{preprint}.
\end{proof}

\begin{definition}
A \textbf{coordinate system} for $\mathcal{D}$ is a collection $\mathcal{F}\subset C(\mathcal{D}, \R)$ with the following property:  for any two  distinct $D,D' \in \mathcal{D}$,   there exists $F\in \mathcal{F}$ such that $F(D)\neq F(D')$.
\end{definition}

\begin{remark}
$\mathcal{F} = C(\mathcal{D}, \R)$ is itself a coordinate system, 
but at least for computational purposes, it is too large 
to be of algorithmic use.
\end{remark}

\begin{definition}
A \textbf{template system} for $\mathcal{D}$ is a set $T\subset C_c(\W)$ such that $\set{ \nu_f : f\in T }$ is a coordinate system for $\mathcal{D}$. The elements of $T$ are called template functions.
\end{definition}

The main utility of template systems 
is that they can be used to construct
dense subsets of $C(\mathcal{D}, \R)$
with respect to the compact-open topology.
In this topology, which is not metrizable as we mentioned 
above,  two 
functions are deemed  to be nearby if their values 
on compact sets are similar. 
Since the space of persistence diagrams 
is rather large and complicated, 
such comparisons (weaker than $L^2$ or $\|\cdot\|_\infty$) are desirable. 

\begin{theorem}\label{thm:regression}
Let $T$ be a template system for $\mathcal{D}$, let $\mathcal{C} \subset \mathcal{D}$ be compact,  and
let $F:\mathcal{C}\rightarrow \R$ be continuous. Then, for any $\epsilon > 0 $ there exist $N\in \N$, a polynomial $p\in \R[t_1, \dots, t_N]$, and template functions $f_1, \dots, f_N \in T$ so that  
\begin{equation}\label{eqn:reg}
    \vert p(\nu_{f_1}(D), \dots, \nu_{f_N}(D)) - F(D) \vert <\epsilon
\end{equation} for all $D\in\mathcal{C}$.

\end{theorem}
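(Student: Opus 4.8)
The plan is to realize $\{\nu_f : f \in T\}$ as a point-separating subalgebra-generating set and then invoke a Stone–Weierstrass-type argument on the compact metric space $\mathcal{C}$. First I would restrict attention to $\mathcal{C}$, which is compact metric (being a subspace of the metric space $(\mathcal{D}, d_B)$), so that $C(\mathcal{C}, \R)$ with the uniform norm is the correct setting and the compact-open topology on this compact set coincides with uniform convergence. Consider the set $\mathcal{A} \subset C(\mathcal{C}, \R)$ consisting of all functions of the form $p(\nu_{f_1}, \dots, \nu_{f_N})$ with $N \in \N$, $f_i \in T$, and $p \in \R[t_1,\dots,t_N]$; equivalently, $\mathcal{A}$ is the subalgebra of $C(\mathcal{C},\R)$ generated by the constants and by $\{\nu_f|_{\mathcal{C}} : f \in T\}$. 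By the preceding Lemma each $\nu_f$ is continuous, so $\mathcal{A} \subset C(\mathcal{C},\R)$ is a genuine subalgebra.

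Next I would verify the hypotheses of the Stone–Weierstrass theorem for $\mathcal{A}$. It contains the constant functions by construction (take $N=1$, $p$ constant), so it is unital. It separates points of $\mathcal{C}$: given distinct $D, D' \in \mathcal{C} \subset \mathcal{D}$, the defining property of a template system provides $f \in T$ with $\nu_f(D) \neq \nu_f(D')$, and $\nu_f \in \mathcal{A}$. Since $\mathcal{C}$ is compact Hausdorff, Stone–Weierstrass gives that $\mathcal{A}$ is dense in $(C(\mathcal{C}, \R), \|\cdot\|_\infty)$. Therefore, for the given continuous $F : \mathcal{C} \to \R$ and $\epsilon > 0$, there exists $g \in \mathcal{A}$ with $\|g - F\|_\infty < \epsilon$; unwinding the definition of $\mathcal{A}$, $g = p(\nu_{f_1}(\cdot), \dots, \nu_{f_N}(\cdot))$ for some $N$, template functions $f_1,\dots,f_N \in T$, and polynomial $p$, which is exactly \eqref{eqn:reg}.

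The only genuine subtlety — and the step I would be most careful about — is making sure $\mathcal{C}$ is compact in the topological (hence complete-metric) sense needed for Stone–Weierstrass, and that point-separation of diagrams transfers to point-separation on $\mathcal{C}$; both are immediate here since $\mathcal{C}$ is assumed compact in $(\mathcal{D}, d_B)$ and the template-system property is quantified over all of $\mathcal{D} \supset \mathcal{C}$. One should also note that no approximation of the bottleneck metric itself is required: the continuity of each $\nu_f$ (already established) is all that links the combinatorics of diagrams to the analytic statement. Thus the theorem reduces cleanly to Stone–Weierstrass once the algebra $\mathcal{A}$ is set up, and there is no hard estimate to grind through.
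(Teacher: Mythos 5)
Your argument is correct and is essentially the proof of Theorem 29 in \cite{preprint}, to which the paper defers: one forms the unital subalgebra of $C(\mathcal{C},\R)$ generated by the restrictions $\nu_f|_{\mathcal{C}}$, notes that it separates points by the defining property of a template system and that each $\nu_f$ is continuous by the preceding lemma, and applies Stone--Weierstrass on the compact Hausdorff space $\mathcal{C}$. No gaps; the setup and the point-separation check are exactly the content of the cited proof.
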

\begin{proof}
See Theorem $29$ in \cite{preprint}.
\end{proof}

In other words,

\begin{corollary}
Let $T\subset C_c(\mathbb{W})$ be a template 
system for $\mathcal{D}$. Then,
the collection of functions of the form
\[
\begin{array}{ccl}
\mathcal{D} & \longrightarrow &\R \\ 
D &\mapsto &p(\nu_{f_1}(D), \dots, \nu_{f_N}(D))
\end{array}
\]
where $N\in \N$, $p\in \R[t_1,\ldots, t_N]$,   and $f_n \in T$, 
is dense in $C(\mathcal{D},\R)$ 
with respect to the compact-open topology.
\end{corollary}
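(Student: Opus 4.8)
The plan is to deduce this directly from \Cref{thm:regression} together with the definition of the compact-open topology; in effect the corollary is just a repackaging of the theorem. Write $\mathcal{P} \subset C(\mathcal{D},\R)$ for the collection of functions $D \mapsto p(\nu_{f_1}(D), \dots, \nu_{f_N}(D))$ with $N \in \N$, $p \in \R[t_1,\dots,t_N]$ and $f_n \in T$. First I would check that $\mathcal{P}$ really does sit inside $C(\mathcal{D},\R)$: by the Lemma each $\nu_{f_n}$ is continuous on $\mathcal{D}$, and a polynomial expression in finitely many continuous real-valued functions is again continuous, so every element of $\mathcal{P}$ is continuous.

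Next, recall that the compact-open topology on $C(\mathcal{D},\R)$ is generated by the subbasic sets $S(\mathcal{C}, U) = \{F \in C(\mathcal{D},\R) : F(\mathcal{C}) \subseteq U\}$, where $\mathcal{C} \subseteq \mathcal{D}$ is compact and $U \subseteq \R$ is open; finite intersections of such sets form a basis. To prove density it suffices to show that $\mathcal{P}$ meets every nonempty basic open set $V = \bigcap_{i=1}^{k} S(\mathcal{C}_i, U_i)$. So I would fix such a $V$, pick $G \in V$, and observe that for each $i$ the image $G(\mathcal{C}_i)$ is compact and contained in the open set $U_i$, hence there is $\epsilon_i > 0$ such that the $\epsilon_i$-neighborhood of $G(\mathcal{C}_i)$ in $\R$ is still contained in $U_i$. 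Then set $\mathcal{C} := \mathcal{C}_1 \cup \cdots \cup \mathcal{C}_k$, which is compact, and $\epsilon := \min\{\epsilon_1, \dots, \epsilon_k\} > 0$.

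Now apply \Cref{thm:regression} to the continuous function $G|_{\mathcal{C}} : \mathcal{C} \to \R$ and this $\epsilon$: there exist $N \in \N$, a polynomial $p \in \R[t_1,\dots,t_N]$ and template functions $f_1,\dots,f_N \in T$ with
\[
\bigl| p(\nu_{f_1}(D),\dots,\nu_{f_N}(D)) - G(D) \bigr| < \epsilon \qquad \text{for all } D \in \mathcal{C}.
\]
Denote this function by $H \in \mathcal{P}$. Then for every $i$ and every $D \in \mathcal{C}_i \subseteq \mathcal{C}$ we have $|H(D) - G(D)| < \epsilon \le \epsilon_i$ with $G(D) \in G(\mathcal{C}_i)$, so $H(D) \in U_i$; thus $H \in S(\mathcal{C}_i, U_i)$ for every $i$, i.e. $H \in V$. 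Hence $\mathcal{P} \cap V \neq \emptyset$, and since $V$ was an arbitrary nonempty basic open set, $\mathcal{P}$ is dense.

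As for the main difficulty: there is essentially no hard analytic content, since \Cref{thm:regression} does all the heavy lifting. The only point needing care is the topological bookkeeping — passing from the uniform estimate on a single compact set provided by the theorem to membership in an arbitrary basic open set of the (non-metrizable) compact-open topology — and this is handled simply by taking the union of the finitely many compact sets appearing in the basic open set and shrinking $\epsilon$ to absorb all the $U_i$ at once, as above.
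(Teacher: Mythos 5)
Your proposal is correct and follows exactly the route the paper intends: the paper states this corollary as an immediate rephrasing of \Cref{thm:regression} (with no separate proof), and your argument simply supplies the routine compact-open bookkeeping --- reducing to basic open sets $\bigcap_i S(\mathcal{C}_i,U_i)$, choosing $\epsilon_i$ so that the $\epsilon_i$-neighborhood of the compact image $G(\mathcal{C}_i)$ stays in $U_i$, and applying the theorem on the union $\mathcal{C}_1\cup\cdots\cup\mathcal{C}_k$. All steps check out, including the preliminary observation that each $p(\nu_{f_1},\dots,\nu_{f_N})$ is continuous.
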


The problem of
constructing template systems 
of reasonable size (e.g., countable)
is addressed by the following 
theorem.

\begin{theorem}
Let $f\in C_c(\mathbb{W})$, $n\in \mathbb{N}$,
$\mathbf{m}\in \mathbb{Z}^2$ 
and define
\[
f_{n,\mathbf{m}}(x) = 
f\left(nx + \frac{\mathbf{m}}{n}\right)
\]
If $f$ is nonzero, then 
$
T = \{f_{n,\mathbf{m}} | 
n\in \mathbb{N}, \; \mathbf{m}\in \mathbb{Z}^2\}\cap C_c(\mathbb{W})
$ is a template system for $\mathcal{D}$.
\end{theorem}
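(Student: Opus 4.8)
The strategy is to verify directly that $\set{\nu_{f_{n,\mathbf{m}}} : f_{n,\mathbf{m}}\in T}$ separates the points of $\mathcal{D}$, which is exactly what it means for $T$ to be a template system. Fix distinct diagrams $D = S_\mu$ and $D' = T_\alpha$. Extending $\mu$ and $\alpha$ by $0$ outside $S$ and $T$, we obtain $\hat\mu,\hat\alpha\colon\W\to\N\cup\set{0}$, and $D\neq D'$ means precisely that $\hat\mu(x_0)\neq\hat\alpha(x_0)$ for some $x_0$; such an $x_0$ lies in $S\cup T\subset\W$, so $pers(x_0) > 0$. The plan is to produce $n\in\N$ and $\mathbf{m}\in\Z^2$ for which $f_{n,\mathbf{m}}\in C_c(\W)$ (hence $f_{n,\mathbf{m}}\in T$), $f_{n,\mathbf{m}}(x_0)\neq 0$, and $\mathrm{supp}(f_{n,\mathbf{m}})$ meets $S\cup T$ only at $x_0$. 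Granting this, every term other than the one at $x_0$ vanishes in the defining sum for $\nu_{f_{n,\mathbf{m}}}$, so $\nu_{f_{n,\mathbf{m}}}(D) = \hat\mu(x_0)\,f_{n,\mathbf{m}}(x_0)$ and $\nu_{f_{n,\mathbf{m}}}(D') = \hat\alpha(x_0)\,f_{n,\mathbf{m}}(x_0)$; since $\hat\mu(x_0)$ and $\hat\alpha(x_0)$ are distinct integers and $f_{n,\mathbf{m}}(x_0)\neq 0$, these values differ, and we are done.

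\emph{Isolating $x_0$.} Set $\epsilon = pers(x_0)/2 > 0$. Writing $x_0 = (a_1,a_2)$, any $y = (y_1,y_2)\in S\cup T$ with $\norm{y-x_0}_\infty < \epsilon/2$ has $pers(y) = y_2 - y_1 > (a_2 - \epsilon/2) - (a_1 + \epsilon/2) = pers(x_0) - \epsilon = \epsilon$, so $y\in u_\epsilon(S)\cup u_\epsilon(T)$, which is finite by the definition of a persistence diagram. Hence only finitely many points of $S\cup T$ lie in $\set{x\in\R^2 : \norm{x-x_0}_\infty < \epsilon/2}$. Since $\W$ is open, we may pick $\rho\in(0,\epsilon/2)$ so small that the ball $B := \set{x\in\R^2 : \norm{x-x_0}_\infty < \rho}$ satisfies $B\subset\W$ and $B\cap(S\cup T) = \set{x_0}$.

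\emph{Placing a rescaled bump in $B$.} Extend $f$ by $0$ to all of $\R^2$; it stays continuous since $K := \mathrm{supp}(f)$ is a compact subset of the open set $\W$, and each $f_{n,\mathbf{m}}$ is then continuous on $\R^2$. Because $f\not\equiv 0$, choose $y_0$ with $f(y_0)\neq 0$ (so $y_0\in K$) and $r > 0$ with $\set{x : \norm{x - y_0}_\infty < r}\subset\set{f\neq 0}$. For each $n$ let $\mathbf{m} = \mathbf{m}(n)\in\Z^2$ be a nearest integer vector to $n y_0 - n^2 x_0$, so $\norm{\mathbf{m} - (ny_0 - n^2 x_0)}_\infty \leq 1/2$. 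Since $x\mapsto nx + \mathbf{m}/n$ is a homeomorphism, $\mathrm{supp}(f_{n,\mathbf{m}}) = \tfrac1n K - \tfrac{\mathbf{m}}{n^2}$; the rounding bound gives that $\tfrac{\mathbf{m}}{n^2}$ is within $\tfrac{1}{2n^2}$ of $\tfrac{y_0}{n} - x_0$ and that $nx_0 + \tfrac{\mathbf{m}}{n}$ is within $\tfrac{1}{2n}$ of $y_0$. Consequently, for every $z\in\mathrm{supp}(f_{n,\mathbf{m}})$,
\[
\norm{z - x_0}_\infty \;\leq\; \frac{\mathrm{diam}_\infty K}{n} + \frac{1}{2n^2},
\]
and $nx_0 + \tfrac{\mathbf{m}}{n}\in\set{x : \norm{x - y_0}_\infty < r}$ once $\tfrac{1}{2n} < r$. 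Thus for all sufficiently large $n$ we have simultaneously $\mathrm{supp}(f_{n,\mathbf{m}})\subset B\subset\W$ (so $f_{n,\mathbf{m}}\in C_c(\W)$, i.e.\ $f_{n,\mathbf{m}}\in T$, and its support meets $S\cup T$ only at $x_0$) and $f_{n,\mathbf{m}}(x_0) = f\!\left(nx_0 + \tfrac{\mathbf{m}}{n}\right)\neq 0$. Fixing one such $n$ gives the required template function.

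The step I expect to be most delicate is this last one: the support of $f_{n,\mathbf{m}}$ contracts at rate $1/n$, whereas the admissible translations $\mathbf{m}/n^2$ form a grid of mesh $1/n^2$, so there is just enough resolution to slide a shrinking copy of $f$ until it covers $x_0$, stays clear of the finitely many other nearby points of $S\cup T$, lies inside $\W$, and keeps the value at $x_0$ away from $0$; making the two rounding estimates line up is the only genuine computation. The remaining ingredients — reducing template-ness to point separation, the finiteness argument that isolates $x_0$, and well-definedness of $\nu_{f_{n,\mathbf{m}}}$ via the earlier Lemma — are routine.
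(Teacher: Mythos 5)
The paper does not prove this statement itself (it defers to Theorem 30 of the cited preprint), so I am judging your argument on its own terms. Your overall strategy --- reduce template-ness to point separation, isolate a point $x_0$ where the extended multiplicities of the two diagrams differ using finiteness of $u_\epsilon(S)\cup u_\epsilon(T)$, and then slide a copy of $f$ shrunk by $1/n$ onto $x_0$ using the mesh-$1/n^2$ grid of admissible translates --- is the right one, and your two rounding estimates are correct: with $\mathbf{m}$ the nearest integer vector to $ny_0-n^2x_0$ one indeed gets $nx_0+\mathbf{m}/n=y_0+O(1/n)$ and $\mathrm{supp}(f_{n,\mathbf{m}})$ within $\mathrm{diam}_\infty(K)/n+1/(2n^2)$ of $x_0$.

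There is, however, a genuine gap: $\W=\set{(x_1,x_2)\;\vert\;0\leq x_1<x_2}$ is \emph{not} open in $\R^2$, because of the closed constraint $0\leq x_1$. Your isolation step asserts ``since $\W$ is open, we may pick $\rho$ so that $B\subset\W$,'' and the conclusion $\mathrm{supp}(f_{n,\mathbf{m}})\subset B\subset\W$ rests on this. If $x_0=(0,a_2)$ --- a completely ordinary situation, e.g.\ every point of an $H_0$ diagram or any feature born at filtration value $0$ --- then no $\R^2$-ball around $x_0$ is contained in $\W$, and in fact the set $\tfrac1n K-\tfrac{\mathbf{m}}{n^2}$ typically protrudes into $\set{x_1<0}$ whenever $f_{n,\mathbf{m}}(x_0)\neq 0$. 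So for such $x_0$ your argument does not establish $f_{n,\mathbf{m}}\in C_c(\W)$, i.e.\ that the chosen function actually belongs to $T$. The claim can be repaired, but not for free: one must interpret the support of $f_{n,\mathbf{m}}$ relative to $\W$ (the overhang past $x_1=0$ is simply outside the domain), then check (i) that this relative support is still compact --- which works because the only non-closed part of $\partial\W$ in $\R^2$ is the diagonal, and the scaled copy of $K$ stays a distance $\geq\delta_K/n$ from it --- and (ii) that the zero-extension used to define $f_{n,\mathbf{m}}$ on points $x\in\W$ with $nx+\mathbf{m}/n\notin\W$ remains continuous along $\set{x\in\W : nx_1+m_1/n=0}$, which requires an additional argument about where $f$ can be nonzero on the axis $\set{x_1=0}$. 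A related small omission: you must take $y_0$ with $f(y_0)\neq 0$ and first coordinate strictly positive (possible since $\set{f\neq 0}$ is relatively open and nonempty), or else no full $\R^2$-ball around $y_0$ lies in $\set{f\neq 0}$. The interior case $x_0=(a_1,a_2)$ with $a_1>0$ is handled correctly by your argument; the boundary case needs to be added.
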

\begin{proof}
See Theorem 30 in \cite{preprint}.
\end{proof}

The goal of this paper is to investigate 
and identify
data-driven methodologies for selecting the 
support of an
initial template function $f$, 
as well as its most relevant re-scaled translates 
$f_{n,\mathbf{m}}$,  
so that the scalar  features 
$\nu_{f_{n,\mathbf{m}}}$ can be used 
successfully in learning  
problems on persistence diagrams.

\section{Adaptive template systems}
In \cite{preprint}    two different template systems are suggested: tent functions and interpolating polynomials.
Specifically, let 
\[
\widetilde{\W} = \set{(x,y)\,\vert\, x\in\R , y\in\R_{>0}}
\]
be the conversion of $\W$ to the  birth-lifetime plane. This conversion is defined by $(a,b)\in\W \mapsto (a,b-a)\in\widetilde{\W}$. 
The \textbf{template system of tent functions} in the birth-lifetime plane is defined as follows. Let $\vect{a}=(a,b)\in\widetilde{\W}$ and $0<\delta<b$, then 
\[
g_{\vect{a},\delta}(x,y) = 
\max 
\left\{ 
1 - \frac{1}{\delta}
\max\{\abs{x-a},\abs{y-b}\}, \; 0  
\right\}
\]
 
In a similar manner one can define a \textbf{template system of interpolating polynomials}. 
Given $\set{(a_i,b_j)}_{i,j}\subset \widetilde{\W}$ and $\set{c_{i,j}}_{i,j}\subset\R$, one can use Lagrange interpolating polynomials to construct a function $f$ such that $f(a_i,b_j) = c_{i,j}$.
 
In general these two approaches require the user to input the meshes used in defining the template systems. By construction, such meshes define the support of the template functions. One shortcoming of this procedure, when applied to \Cref{thm:regression}, is that without prior knowledge about the compact set $\mathcal{C}\subset\mathcal{D}$ the number of template functions that carry no information relevant to the problem can be high. This drawback is illustrated in \Cref{fig:mesh_pdgm}. 

\begin{figure}[!htb]
    \centering
    \includegraphics[width=0.8\textwidth]{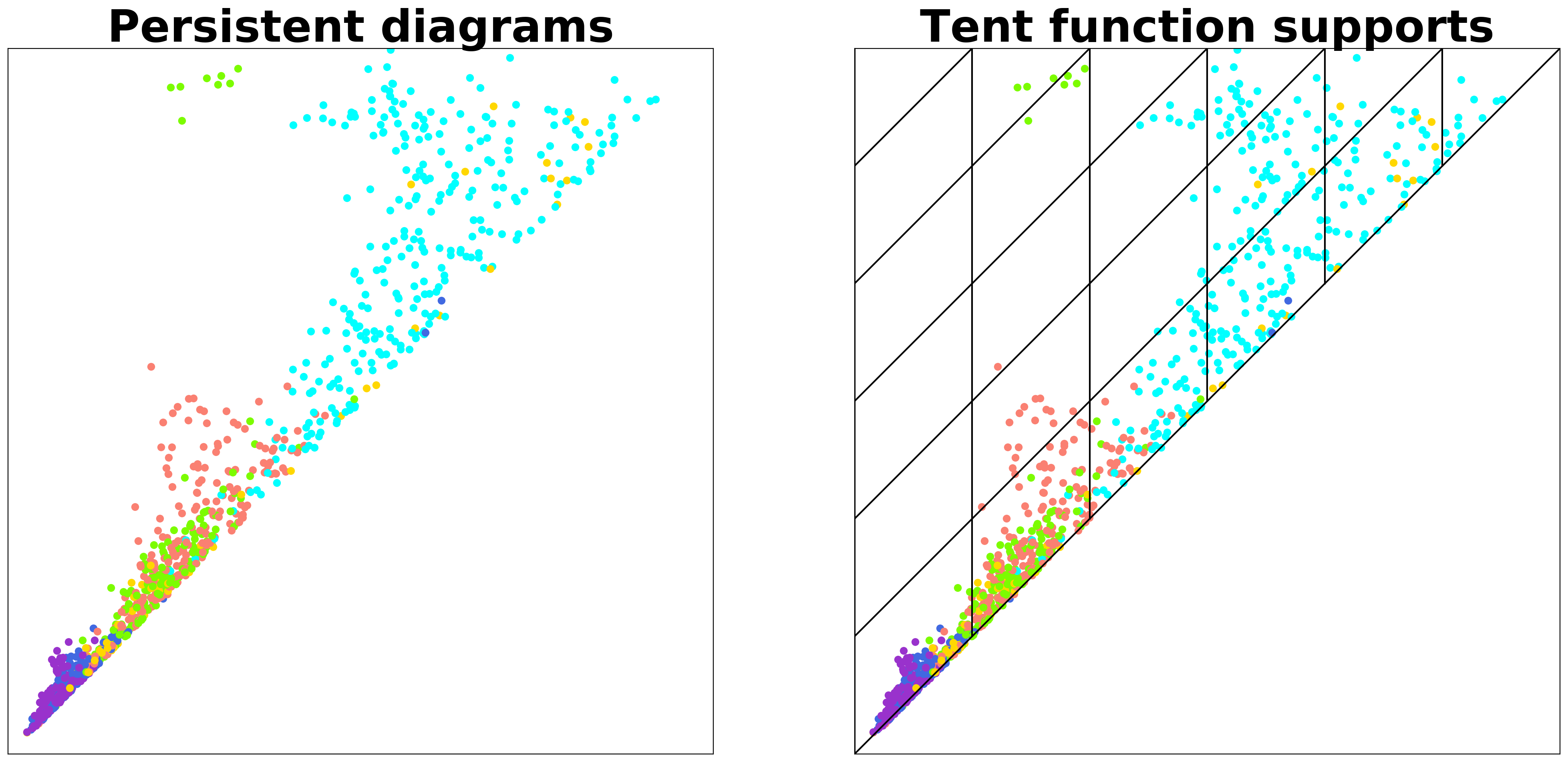}
    \caption{\textbf{Left:} Collection of persistent diagrams colored by class. \textbf{Right:} Mesh covering the collection on the left.}
    \label{fig:mesh_pdgm}
\end{figure}

The main goal of this paper is to present a methodology to define the template system used in \Cref{thm:regression} that incorporates the  prior information we have about the particular learning task. Such methodology is what we refer to as \textbf{adaptive template functions}.

\begin{figure}[!htb]
    \centering
    \includegraphics[width=0.8\textwidth]{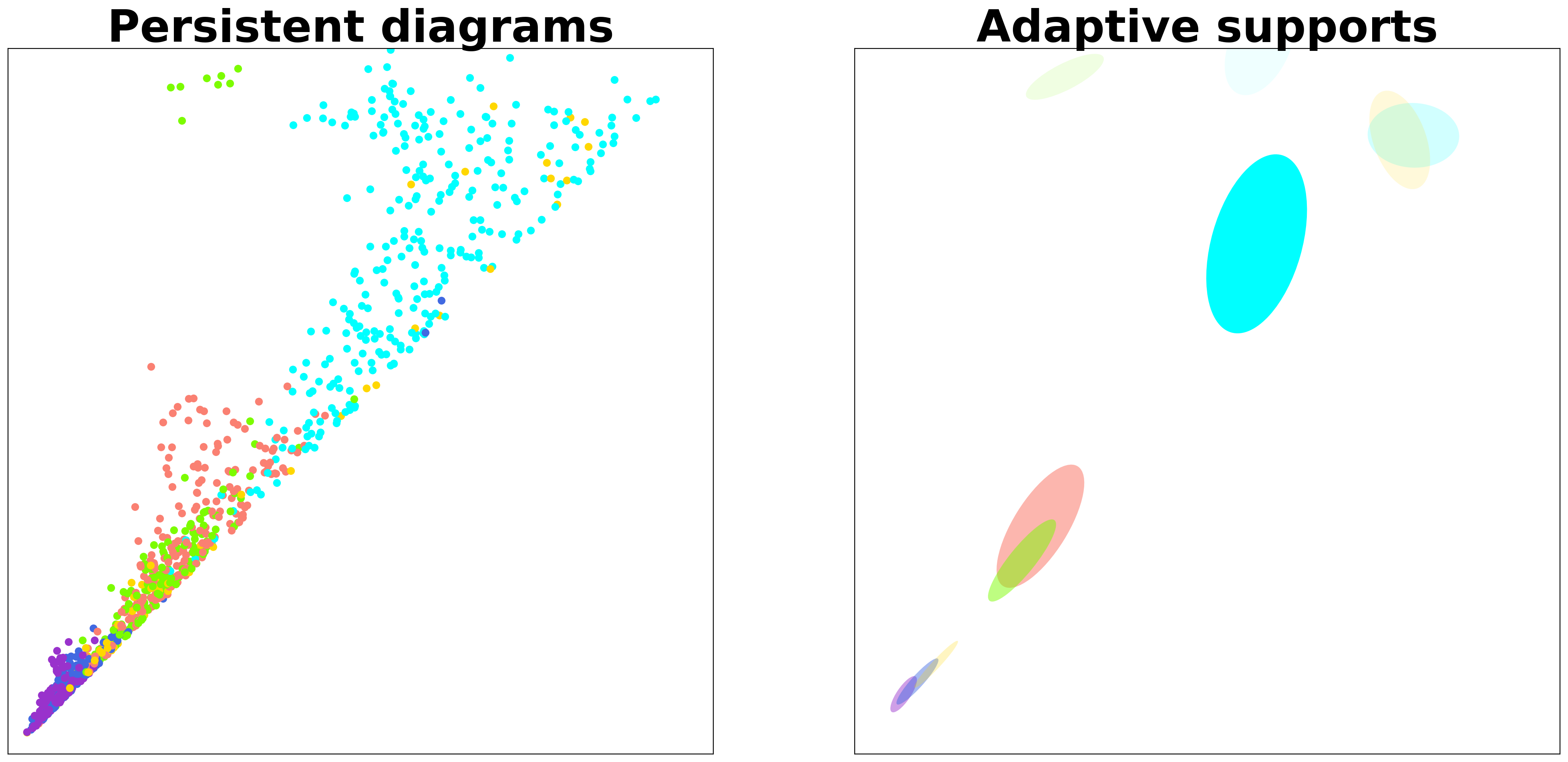}
    \caption{\textbf{Left:} Collection of persistent diagrams colored by class. \textbf{Right:} collection of open balls as supports for template functions.}
    \label{fig:cover_pdgm}
\end{figure}

Our approach to defining adaptive template functions is to first identify a collection of open ellipses  in $\widetilde{\W}$ or $\W$  as in \Cref{fig:cover_pdgm}. Each ellipse in this collection will be the support for a template function defined in the following manner. Let $A\in M_{2\times 2}(\R)$ represent the quadratic form in two variables corresponding to an ellipse in the collection, and let $\vect{x}\in \W$ be its center. Then, the associated template function is   
\[
f_A(\vect{z}) 
= 
\begin{cases}
1 - (\vect{z} - \vect{x})^* A (\vect{z} - \vect{x}) &, \; (\vect{z} - \vect{x})^* A (\vect{z} - \vect{x}) < 1 \\
0 
&,\; (\vect{z} - \vect{x})^* A (\vect{z} - \vect{x}) \geq 1
\end{cases}
\]

To obtain the collection of ellipses ($A$) mentioned above we will use and compare three different approaches; namely, Cover-Tree Entropy Reduction (CDER, \cite{cder_paper}), Gaussian Mixture Models (GMM) and Hierarchical Density-Based Spatial Clustering of Applications with Noise (HDBSCAN).
We now provide a brief description of each method.

\subsection{ Cover-Tree  Entropy  Reduction - CDER}\label{subsec:cder}
The main objective of this algorithm is to find a partial cover tree for a collection of labeled point clouds in $\R^n$. CDER searches for the cover tree in convex regions that are likely to have minimum local entropy. In this section we will explain the notion of entropy used by CDER, as it is relevant to explaining some of the results in \Cref{sec:examples}.

Let $\chi = \set{X_1,\dots,X_N}$ be a collection of point clouds $X_i\subset\mathbb{R}^d$ ---
which in our case will be persistence diagrams ---
and  define $\displaystyle \underline{\chi} := \bigsqcup_{i=1}^N X_i$.
We also have a labeling map at the level of point clouds
$$
\lambda : \chi\rightarrow\set{1,\dots,L} := \mathcal{L}.
$$

Notice that we have a natural map $ind : \underline{\chi}\rightarrow \chi$ given by $ind(x) = X_i$ if $x\in X_i$. This allow us to define $$\underline{\chi}_l := (\lambda\circ ind)^{-1}(l),$$ the set of all point in a point cloud labeled $l\in\mathcal{L}$.

Now we will assign weights to each point $x\in\underline{\chi}$ in the following manner:

\begin{enumerate}
	\item Each label is equally likely among the data and $\underline{\chi}$ has a total weight of $1$.  Thus each label $l\in\mathcal{L}$ has an allocated weight of $1/L$.
	\item Now consider $$\lambda^{-1}(l) = \set{\text{All point clouds with label } l},$$ we assume   again that each point cloud in $\lambda^{-1}(l)$ is equally likely, 
	so each $X_i\in\lambda^{-1}(l)$ has an allocated weight of $1/(N_l L)$, where $N_l = \norm{\lambda^{-1}(l)}$.
	\item Finally each point in $x\in X_i$ is equally likely, 
	and thus 
	\[w(x) = \frac{1}{\norm{X_i} N_l L}\]
\end{enumerate}

\begin{definition} 
Let $\chi = \set{X_1,\dots,X_N \;\vert\; X_i\subset \mathbb{R}^d}$ 
be a collection of point clouds
together with a label map $\lambda : \chi\rightarrow\set{1,\dots,L} := \mathcal{L}$ and a weight function $w:\underline{\chi}\rightarrow \mathbb{R}$. 
For any convex and compact set $\Omega \subset \mathbb{R}^d$ we define the following quantities:
\begin{enumerate}
	\item 
	The \textit{total weight of $l \in \mathcal{L}$} in $\Omega$ $$\displaystyle w_l(\Omega) = \sum\set{w(x) \;\vert\; x\in\Omega\cap\underline{\chi}_l}.$$ 
	\item the total weight of $\Omega$ $$W(\Omega) = \sum_{l\in\mathcal{L}} w_l(\Omega)$$
\end{enumerate}
\end{definition}

Using the weights assigned above we can interpret $\displaystyle \frac{w_l(\Omega)}{W(\Omega)}$ as the probability of a point in $\Omega$ to have label $l$. 
This leads to the following definition.

\begin{definition} Let  $\chi$, $\lambda $, and  $w$ be as before. 
For any convex and compact set $\Omega \subset \mathbb{R}^d$ its \textit{entropy} is defined by $$S(\Omega) = -\sum_{l\in\mathcal{L}} \frac{w_l(\Omega)}{W(\Omega)} \log_L \left( \frac{w_l(\Omega)}{W(\Omega)} \right).$$
\end{definition}

This definition is borrowed from   information theory. 
Notice that if all $w_l(\Omega)$ are roughly the same, then $S(\Omega) \approx 1$. But, if for example, $\Omega$ only contains points with a single label then we must have $S(\Omega) = 0$.

\begin{remark}\label{rem:entropy}

Generally, we would expect the entropy to decrease as we select smaller subsets of $\Omega$. 
However,  this is not always the case. 

\begin{figure}[!htb]
\includegraphics[width=0.8\textwidth]{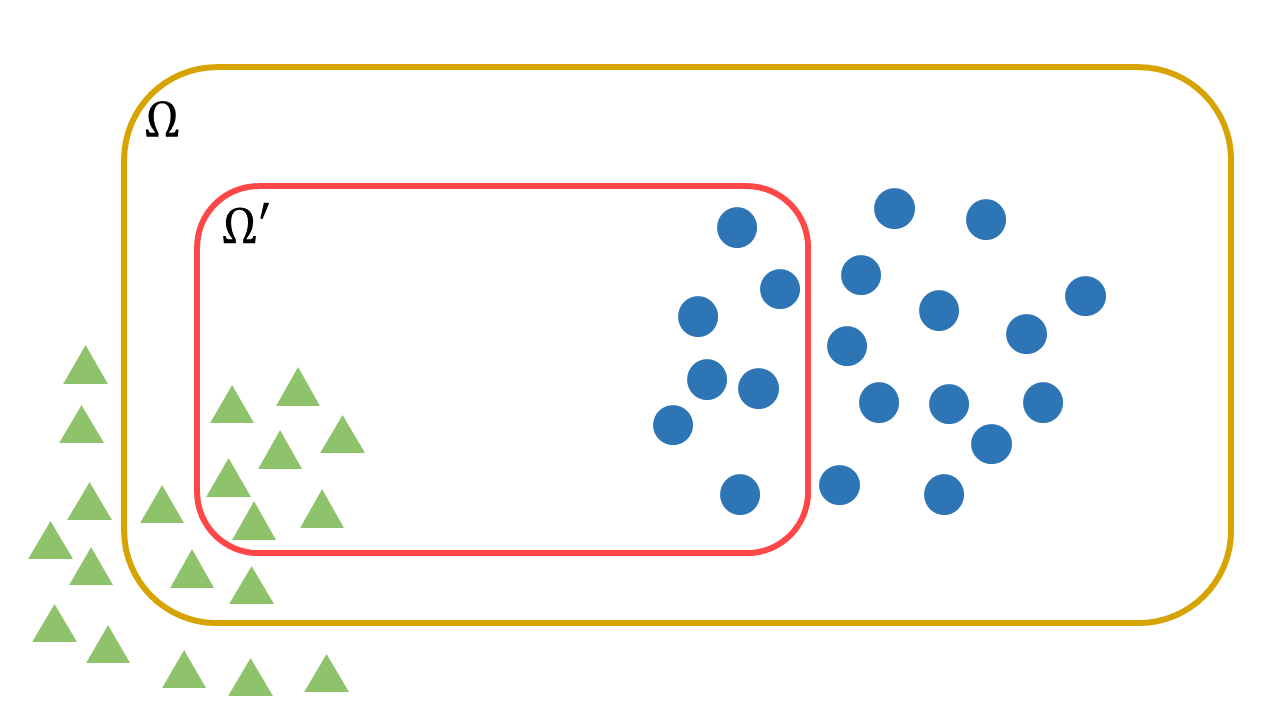}
\caption{$\Omega'\subset\Omega$ but $S(\Omega) < S(\Omega')$.}\label{fig:entropy}
\end{figure}

For instance, consider point clouds $X_1$ and $X_2$ such that $\abs{X_1}=10$ and $\abs{X_2}=20$, each point cloud with a different label $\set{0,1}$, and compact sets $\Omega, \Omega'$ as show in \Cref{fig:entropy}.
We can easily see that $\omega_0(\Omega') =  \omega_1(\Omega') = 7/(20\cdot 20\cdot 2)$, so $W(\Omega') = 7 / (20\cdot 20\cdot 2) + 7/(20\cdot 20\cdot 2) = 7/400$. Finally $S(\Omega') = 1$.
A similar calculation will show that $S(\Omega) \approx 0.9182$,
and thus $\Omega'\subset\Omega$ but $S(\Omega) < S(\Omega')$.
\end{remark}

\subsection{Gaussian Mixture Models - GMM}

This algorithm is an implementation of the Expectation-Maximization (EM) algorithm to fit Gaussian models to a collection of points. Recall that an EM algorithm is an iterative method to solve maximum likelihood estimation of parameters for a given model; in our case Gaussian models.

The EM algorithm iterates over two steps: an expectation step and a maximization step. The first step defines the expected value of the log likelihood function using a given set of parameters for the model. The maximization step finds a new set of parameters that maximizes the previously described expected value.

\subsection{Hierarchical Density-Based Spatial Clustering of Applications with Noise - HDBSCAN}

HDBSCAN \cite{hdbscan} is a hierarichal clustering algorithm  extending  BDSCAN. The latter, finds clusters as the connected components of a graph. The vertices of this graph are the elements in the data set after removing  ``noise points'' and the adjacency is defined by a user-provided  parameter $\epsilon$.

HDBSCAN constructs a hierarchical collection of DBSCAN solutions by changing the parameter used to compute the adjacency in the DBSCAN algorithm. Once this hierarchy of solutions is obtained, the algorithm extracts from its hierarchy dendrogram a sumarized collection of significant clusters.

\section{Experimental Results}\label{sec:examples}
Thus far we have developed a methodology for deriving adaptive template systems from labeled persistence diagrams.
The main idea is to use algorithms such as 
CDER, GMM and HDBSCAN to identify the supports of the 
functions in the template system.
We will use these adaptive templates to produce feature vectors in order to solve supervised classification problems. In this section we present three examples of supervised learning, where adaptive template functions yield featurizations that improve the classification accuracy or robustness of the classification model.
Our implementation of adaptive template systems with CDER, GMM and HDBSCAN 
as well as the scripts to replicate all the results in this section can be found in the associated GitHub repository\footnote{https://github.com/lucho8908/adaptive\_template\_systems.git}.

\subsection{Manifolds}

\begin{figure}[!htb]
    \centering
    \includegraphics[width=0.8\textwidth]{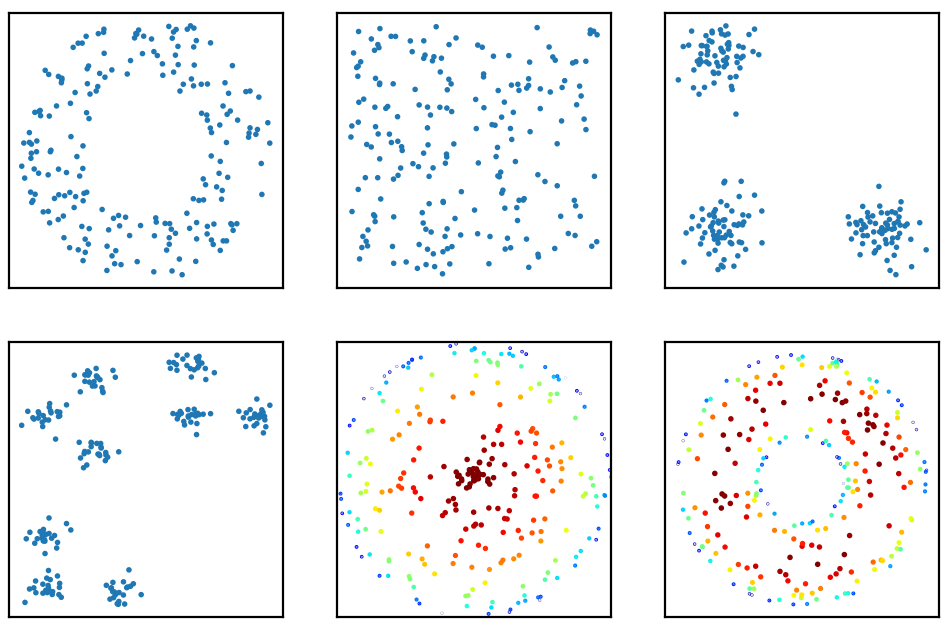}
    \caption{Example of the $6$ manifolds, from top left to bottom right we have: annulus, cube, $3$ clusters, $3$ cluster of $3$ clusters, $S^2$ (projected on the $xy$-plane) and torus (projected on the $xy$-plane).}
    \label{fig:manifolds}
\end{figure}

For this example we revisit an experiment presented in \cite{preprint} and \cite{persistent_images}. We generated point clouds sampled from different manifolds in $\R^2$ or $\R^3$. Each point cloud has $200$ points and the manifolds considered are the following: an \textbf{annulus} with inner radius 1 and outer radius 2 centered at $(0,0)$, \textbf{3 clusters} of points drawn from normal distributions with means $(0,0)$, $(0,2)$ and $(2,0)$ all with standard deviation $0.05$, \textbf{3 cluster of 3 clusters} of points drawn from normal distributions with standard deviation $0.05$ and means $(0,0)$, $(0,1.5)$, $(1.5,0)$, $(0,4)$, $(1,3)$, $(1,5)$, $(3,4)$, $(3,55)$ and $(4.5,4)$, \textbf{cube} defined as $[0,1]^2\subset\R^2$, \textbf{torus} obtained from rotating a circle of radius $1$ centered at $(2,0)$ on the $xz$-plane around the $z$-axis and \textbf{sphere} $S^2\subset\R^3$ with uniform noise in $[-0.05,0.05]$ on the normal direction.

We used CDER, GMM and HDBSCAN to generate the supports of the functions  that form our template systems. We reserved $33\%$ of the data for testing,  and trained a kernel ridge regression model on the remaining data 
($\%67$). 
In addition, we investigate the effect of increasing the number of point clouds sampled from each manifold.

\begin{table*}[!htb]
\centering
\caption{Manifold classification: Each row corresponds to the number of samples taken from each manifold. The first two columns show the best results reported in \cite{preprint}.}
\label{tab:manifold_reslts}
\begin{adjustbox}{width=1.2\textwidth,center=\textwidth}
\small
\begin{tabular}{| c | cc | cc | cc | cc |} \hline
             &                &                & \textbf{CDER}                            &                                          & \textbf{GMM}     &                  & \textbf{HDBSCAN} &                  \\
             & \textbf{Train} & \textbf{Test}  & \textbf{Train}                           & \textbf{Test}                            & \textbf{Train}   & \textbf{Test}    & \textbf{Train}   & \textbf{Test}    \\ \hline
\textbf{10}  & 0.99 $\pm$ 0.9 & 0.96 $\pm$ 3.2 & \cellcolor[HTML]{90EE90}0.99 $\pm$ 0.001 & \cellcolor[HTML]{90EE90}0.98 $\pm$ 0.034 & 0.99 $\pm$ 0.001 & 0.91 $\pm$ 0.075 & 1.00 $\pm$ 0.000 & 0.90 $\pm$ 0.092 \\
\textbf{25}  & 0.99 $\pm$ 0.3 & 0.99 $\pm$ 1.0 & \cellcolor[HTML]{90EE90}0.99 $\pm$ 0.001 & \cellcolor[HTML]{90EE90}0.99 $\pm$ 0.001 & 0.99 $\pm$ 0.004 & 0.99 $\pm$ 0.009 & 1.00 $\pm$ 0.000 & 0.89 $\pm$ 0.031 \\
\textbf{50}  & 1.00 $\pm$ 0.0 & 0.99 $\pm$ 0.9 & \cellcolor[HTML]{90EE90}0.99 $\pm$ 0.001 & \cellcolor[HTML]{90EE90}0.99 $\pm$ 0.001 & 0.99 $\pm$ 0.002 & 0.99 $\pm$ 0.008 & 1.00 $\pm$ 0.000 & 0.95 $\pm$ 0.003 \\
\textbf{100} & 0.99 $\pm$ 0.1 & 0.99 $\pm$ 0.4 & \cellcolor[HTML]{90EE90}0.99 $\pm$ 0.001 & \cellcolor[HTML]{90EE90}0.99 $\pm$ 0.001 & 0.99 $\pm$ 0.004 & 0.99 $\pm$ 0.005 & 1.00 $\pm$ 0.000 & 0.97 $\pm$ 0.011 \\
\textbf{200} & 0.99 $\pm$ 0.1 & 0.99 $\pm$ 0.3 & \cellcolor[HTML]{90EE90}0.99 $\pm$ 0.002 & \cellcolor[HTML]{90EE90}0.99 $\pm$ 0.005 & 0.99 $\pm$ 0.002 & 0.99 $\pm$ 0.003 & 1.00 $\pm$ 0.000 & 0.98 $\pm$ 0.005 \\ \hline
\end{tabular}
\end{adjustbox}
\end{table*}

In \Cref{tab:manifold_reslts} (see \cpageref{tab:manifold_reslts}) we present the mean accuracy of the model on both the training and testing data after averaging the results over $10$ experiments for each sampling size. Furthermore, \Cref{tab:manifold_reslts} contains the results obtained from using CDER, GMM and HDBSCAN to find the adaptive templates as well as the best results presented by \cite{preprint} on the same classification problems.

The first important feature to highlight is that for all the different sampling sizes the adaptive templates accuracy results show a smaller standard deviation than the results reported in \cite{preprint}. At the same time, the mean accuracy of our methodology is comparable with the state of the art results (in \cite{preprint}). 

It is worth mentioning that across all the different methods used in this work to obtain adaptive template systems, CDER provides the most stable results. This is specially significant for the smaller sample size (the first row in \Cref{tab:manifold_reslts}). 

\subsection{Shape data}\label{subsec:shape_data}

In this example we consider the synthetic SHREC 2014 data set \cite{Pickup:2014}, 
of which some instances are shown in Figure \ref{fig:shrecExamples}. We compare our result to the methods reported in \cite{preprint} by extracting features using adaptive template systems for the same data from \cite{Reininghaus} and \cite{preprint}. In \cite{Reininghaus} the authors defined a function on each mesh using a heat kernel signature, \cite{sun}, for $10$ parameters and computed persistent diagrams for dimensions $0$ and $1$.

\begin{figure}[!htb]
    \centering
    \includegraphics[width=.8\textwidth]{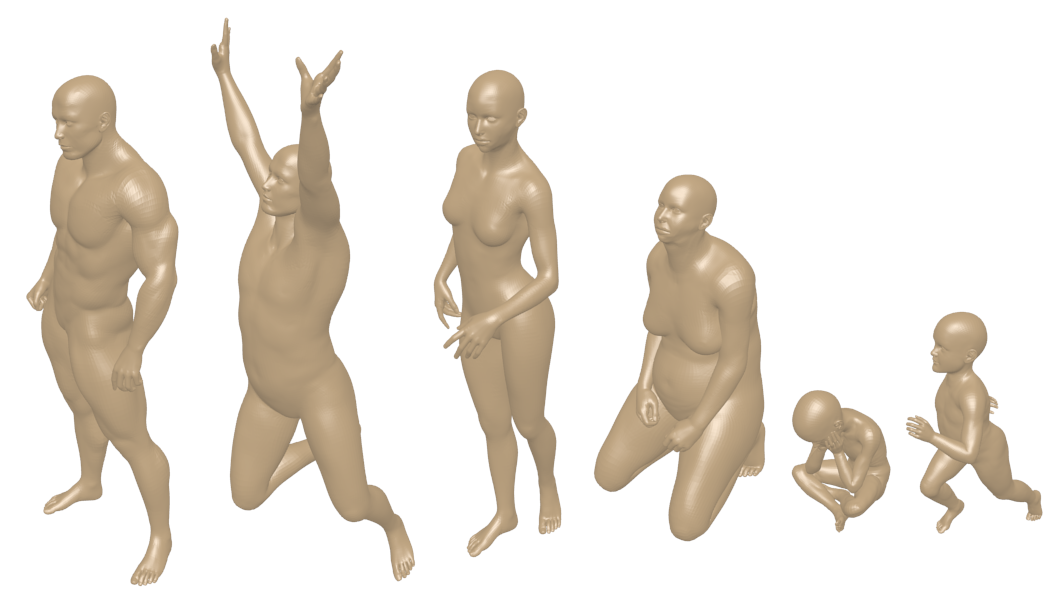}
    \caption{Examples of shapes and poses in the SHREC synthetic data set.}
    \label{fig:shrecExamples}
\end{figure}

For each one of the $10$ parameter values we have $300$ pairs of persistence diagrams and the goal of the problem is to predict the human model. The models correspond to $5$ different poses for people labeled as male, female and child; giving us a total of $15$ labels. Lets us remark that each one of the $10$ parameters yields a different classification problem. 

\begin{table}[!htb]
\centering
\caption{Shape classification: Portion out of the $10$ problems, for which each adaptive template system yields the best classification results. The cells with a dash (-) indicate that no computations were carried out.}
\label{tab:shape_results}
\begin{tabular}{| c | c | c | c |} \hline
              & \bf{Polynomial}                    & \bf{RBF}                           & \bf{Sigmoid}            \\ \hline
\bf{Kernel}   & \cellcolor[HTML]{90EE90}CDER = 0.6 & \cellcolor[HTML]{90EE90}CDER = 0.4 & CDER = 0.2                            \\
\bf{Ridge}    & GMM = 0.3                          & \cellcolor[HTML]{90EE90}GMM = 0.4  & \cellcolor[HTML]{90EE90}GMM = 0.4     \\
              & HDBSCAN = 0.1                      & HDBSCAN = 0.1                      & \cellcolor[HTML]{90EE90}HDBSCAN = 0.4 \\ \hline
              & \cellcolor[HTML]{90EE90}CDER = 0.7 & \cellcolor[HTML]{90EE90}CDER = 0.7 & \cellcolor[HTML]{90EE90}CDER = 0.8    \\
\bf{SVM}      & GMM = 0.3                          & GMM = 0.2                          & GMM = 0.2                             \\
              & HDBSCAN = 0.0                      & HDBSCAN = 0.0                      & HDBSCAN = 0.0                         \\ \hline
\bf{Random}   & \cellcolor[HTML]{90EE90}CDER = 0.6 & -                                  & -                                     \\
\bf{Forest}   & GMM = 0.3                          & -                                  & -                                     \\
              & HDBSCAN = 0.1                      & -                                  & -                                     \\ \hline
\end{tabular}
\end{table}

We considered CDER, GMM and HDBSCAN as methods to obtain adaptive template systems. For each one of these template systems we used three different kernel methods to solve the classification problems, namely, kernel ridge regression, kernel support vector machines (SVM) and random forest. Finally, three different kernels were examined, polynomial, radial basis function (RBF) and sigmoid kernels. 

\Cref{tab:shape_results} shows the portion, out of the $10$ problems, for which a given adaptive template system yields the best classification results. 
For instance, 
the entry corresponding to ridge regression with a polynomial kernel (first row and first column). 
shows that the CDER template system yields the best classification accuracy in $6$ our of $10$ problems, GMM is the best for $3$ out of $10$ and HDBSCAN is superior in $1$ out of the $10$ problems.

With this interpretation of \Cref{tab:shape_results} in mind, we can see that CDER adaptive template systems yield more accurate classification results than GMM or HDBSCAN templates. This holds true for all but one, of the kernel and kernel method combinations explored in this experiment.

\begin{table*}[!htb]
\centering
\caption{Shape classification: Classification accuracy for adaptive templates (ours), tent functions and interpolating polynomials \cite{preprint}}
\label{tab:shape_results_best}
\begin{adjustbox}{width=1.2\textwidth,center=\textwidth}
\small
\begin{tabular}{|c|c c|c c|c c c c|}
\hline
            & \multicolumn{2}{|c|}{\textbf{Global  features}}         & \multicolumn{6}{|c|}{\textbf{Local features}}                                                                                                                                                                                                               \\ \hline
            & \multicolumn{2}{|c|}{\textbf{Interpolating Polynomials}} & \multicolumn{2}{|c|}{\textbf{Tent functions}}    & \multicolumn{4}{|c|}{\textbf{Adaptive templates}}                                                                                                                                                       \\ 
\textbf{Freq.}       & \textbf{Train}                         & \textbf{Test}                           & \textbf{Train}                        & \textbf{Test}                        & \textbf{Train}                        & \textbf{Test}                         &                              &                             \\ \hline
\textbf{1 } & 0.99 $\pm$ 0.3                                  &  0.90 $\pm$ 5.3                         & 0.08$\pm$0.30                         & 0.03$\pm$0.5                         & 0.79$\pm$0.01                         & 0.73$\pm$0.02                         & GMM                          & Kernel Ridge                \\ 
\textbf{2 } & 1.00 $\pm$ 0.0                                  &  0.95 $\pm$ 2.4                         & 0.08$\pm$0.40                         & 0.03$\pm$1.00                        & 0.84$\pm$0.00                         & 0.8$\pm$0.03                          & GMM                          &                             \\
\textbf{3 } & 0.99 $\pm$ 0.5                                  &  0.90 $\pm$ 2.0                         & 0.80$\pm$1.3                          & 0.44$\pm$4.3                         & 0.7$\pm$0.01                          & 0.66$\pm$0.03                         & HDBSCAN                      &                             \\
\textbf{4 } & 0.98 $\pm$ 0.9                                  &  0.84 $\pm$ 3.9                         & 0.89$\pm$1.5                          & 0.69$\pm$4.9                         & 0.7$\pm$0.01                          & 0.67$\pm$0.03                         & GMM                          &                             \\
\textbf{5 } & 0.99 $\pm$ 0.4                                  &  0.93 $\pm$ 2.2                         & 0.76$\pm$2.7                          & 0.58$\pm$7.9                         & 0.78$\pm$0.04                         & \cellcolor[HTML]{90EE90}0.76$\pm$0.08 & \cellcolor[HTML]{90EE90}CDER &                             \\ \hline
\textbf{6 } & \cellcolor[HTML]{90EE90}0.98 $\pm$ 0.5          &  \cellcolor[HTML]{90EE90}0.92 $\pm$ 1.8 & \cellcolor[HTML]{90EE90}0.96$\pm$0.67 & \cellcolor[HTML]{90EE90}0.89$\pm$1.7 & \cellcolor[HTML]{90EE90}0.97$\pm$0.01 & \cellcolor[HTML]{90EE90}0.92$\pm$0.03 & \cellcolor[HTML]{90EE90}CDER & \cellcolor[HTML]{90EE90}SVM \\
\textbf{7 } & 0.99 $\pm$ 0.4                                  &  0.95 $\pm$ 1.4                         & 0.98$\pm$0.60                         & 0.94$\pm$2.5                         & 0.96$\pm$0.02                         & \cellcolor[HTML]{90EE90}0.94$\pm$0.05 & \cellcolor[HTML]{90EE90}CDER &                             \\ \hline
\textbf{8 } & 0.99 $\pm$ 0.4                                  &  0.94 $\pm$ 2.2                         & 0.91$\pm$1.2                          & 0.89$\pm$3.3                         & 1$\pm$0.00                            & \cellcolor[HTML]{90EE90}0.88$\pm$0.05 & \cellcolor[HTML]{90EE90}CDER & Random Forest               \\
\textbf{9 } & 0.98 $\pm$ 1.3                                  &  0.92 $\pm$ 2.1                         & 0.64$\pm$2.3                          & 0.53$\pm$3.8                         & 1$\pm$0.00                            & \cellcolor[HTML]{90EE90}0.88$\pm$0.03 & \cellcolor[HTML]{90EE90}CDER &                             \\
\textbf{10} & 0.97 $\pm$ 1.1                                  &  0.89 $\pm$ 4.6                         & 0.27$\pm$3.4                          & 0.18$\pm$5.6                         & 1$\pm$0.00                            & \cellcolor[HTML]{90EE90}0.9$\pm$0.08  & \cellcolor[HTML]{90EE90}CDER &                             \\ \hline
\end{tabular}
\end{adjustbox}
\end{table*}

Having stablished how CDER, GMM and HDBSCAN compare across different kernel methods and kernels,
next we 
compare our classification results with the state of the art. \Cref{tab:shape_results_best} contains  our classification accuracy on the training and testing shape data, as well as the results obtained using
the tent functions and interpolating polynomials from \cite{preprint}.

The first important feature to remark is that for $7$ out of the $10$ problems, the model obtained from adaptive coordinates has less overfitting than 
interpolating polynomials. Meaning that for most of the classification problems studied in this example, adaptive template systems provide a more robust classification model. 
An additional argument in favor of the robustness of our approach is 
that the standard deviation of all the models using adaptive templates is smaller than those presented in \cite{preprint}.
Moreover, when comparing adaptive template systems with tent functions  (see \Cref{tab:shape_results_best}) we attain better classification results on the testing set across all the problems presented. 
This highlights the benefits 
of adaptive versus non-adaptive local template systems.

It is pertinent to mention that the results in \Cref{tab:shape_results_best} correspond to a specific selection of parameters for each kernel and regularization in each classification method. In fact, from our methodology we can find a combination of kernel parameters and regularization that yields higher classification accuracy in the training set. But, for such models the overfitting issues are more noticeable.

Finally, since the end goal of this problem is to solve the multiclass classification problem in the synthetic SHREC 2014 data set. We can select the problem corresponding to the frequency $6$ in the heat kernel signature (row $6$ in \cref{tab:shape_results_best}) as the one that gives us the best classification accuracy while minimizing the overfiting concern. Such result is obtained using a CDER template system and a regularized SVM method.

\subsection{Protein classification}\label{subsec:protein_classification}

We consider next the data set PCB00019 from the Protein Classification Benchmark Collection \cite{protein_collection}. This problem set contains $1,357$ proteins and $55$ classification tasks. Our results will be compared to those reported  in \cite{protein_class}.

\begin{figure}[!htb]
    \centering
    \includegraphics[width = 0.8\textwidth]{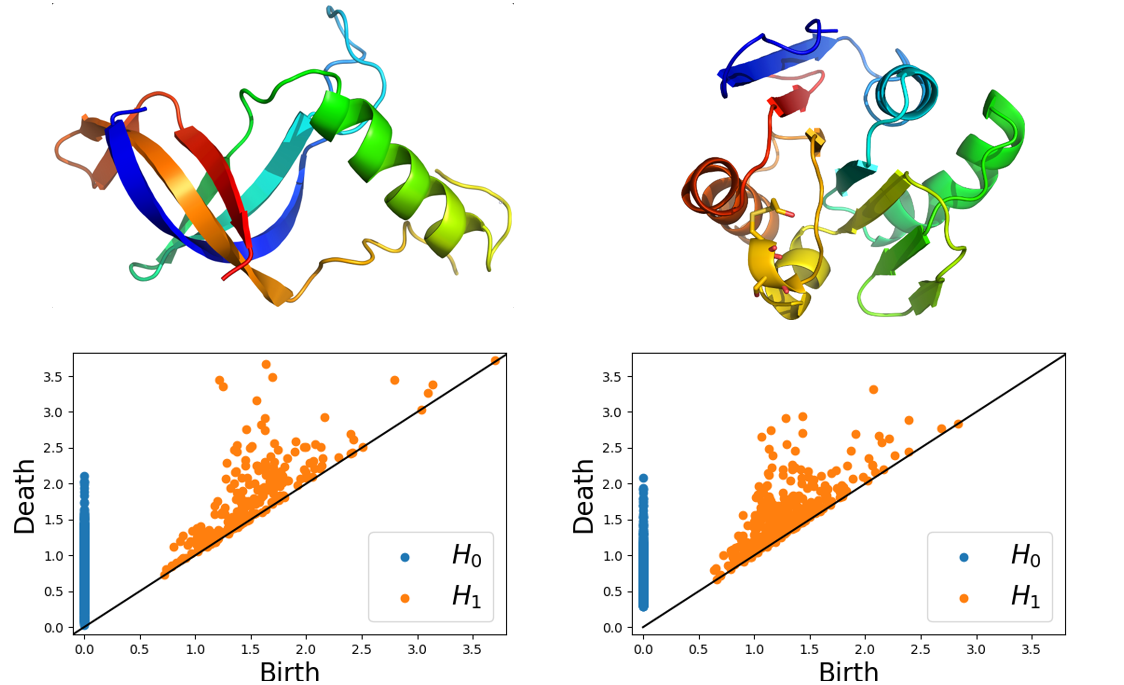}
\caption{Examples of data points in PCB00019 and their corresponding persistent diagrams. \textbf{Top row:} Protein domains from \cite{SCOPe}. \textbf{Bottom row:} Persistent diagrams in dimensions $0$and $1$.}
\label{tab:proteins_and_dgms}
\end{figure}

To compare our results with the ones in \cite{protein_class}, we report the average classification accuracy over the $55$ classification tasks in the data set PCB00019.

\begin{table}[!htb]
\centering
\caption{Protein classification: average classification accuracy for the $55$ tasks in the data set PCB00019.}
\label{tab:proteins}
\begin{tabular}{cc|c|c|}
\cline{3-4}
                                           &                                                  & \textbf{Train}                                                 & \textbf{Test}                                                  \\ \hline
\multicolumn{1}{|c|}{\textbf{}}            & {\color[HTML]{000000} \textbf{Polynomial}}       & \cellcolor[HTML]{90EE90}{\color[HTML]{000000} 0.90 $\pm$ 0.07} & \cellcolor[HTML]{90EE90}{\color[HTML]{000000} 0.98 $\pm$ 0.02} \\ \cline{2-4} 
\multicolumn{1}{|c|}{\textbf{CDER}}        & \textbf{RBF}                                     & 0.91 $\pm$ 0.06                                                & 0.97 $\pm$ 0.02                                                \\ \cline{2-4} 
\multicolumn{1}{|c|}{\textbf{}}            & \textbf{Sigmoid}                                 & \cellcolor[HTML]{90EE90}0.90 $\pm$ 0.07                        & \cellcolor[HTML]{90EE90}0.98 $\pm$ 0.02                        \\ \hline
\multicolumn{2}{|c|}{\textbf{Topological features in \cite{protein_class}}} & -                                                              & 0.82 $\pm$ ----                                                \\ \hline
\end{tabular}
\end{table}

\Cref{tab:proteins} shows the average classification accuracy and standard deviation for each 
of the classification tasks from PCB00019. 
Here we used a regularized kernel ridge regression with polynomial, RBF and sigmoid kernels. 
The last row in \Cref{tab:proteins} displays the average classification accuracy for the testing set reported in \cite{protein_class}. We note that the authors do not report standard deviation or average classification accuracy for the testing set. 

It is meaningful to remark that in \cite{protein_class} topological features are used to solve the classification problems. Those features were constructed using persistence diagrams as to reflect relevant properties of the proteins in the given data set.

\section{Discussion} 

This paper investigates the viability of utilizing data-driven  methodologies to localize features in persistence diagrams.
These features are used in subsequent 
supervised learning tasks
for classification problems 
where shape is an important feature.
We examined three different algorithms, CDER, GMM and HBDSCAN to produce adaptive template functions.
Through extensive testing 
with real and synthetic data sets, 
we demonstrate that
CDER provides a more robust collection of adaptive features while maintaining classification accuracy on par with the state of the art. 
In terms of time complexity, CDER 
also outperforms  GMM and HDBSCAN for the problems here considered.

\section*{Acknowledgment}

The authors gratefully thank Elizabeth Munch for providing the data set used in \cref{subsec:shape_data},   Kelin Xia for providing the data set used in \cref{subsec:protein_classification}, and   Lida Kanari for  providing a data set which did not make it into the final version of the paper. 
This work was partially supported by the NSF under grant DMS-1622301.

\bibliography{biblio}{}
\bibliographystyle{plain}
\end{document}